\DeclareMathOperator*{\argmax}{arg\,max} 
\theoremstyle{assumption}
\newtheorem{assumption}{Assumption}
\begin{document}
\title{Exploring Partially Observed Networks with  Nonparametric Bandits}
%
%
\author{Kaushalya Madhawa\inst{1} \and Tsuyoshi Murata\inst{1}}
\authorrunning{K. Madhawa and T.Murata}
%
\institute{Tokyo Institute of Technology, Ookayama, Meguro-ku, Tokyo 152-8552, Japan
\email{kaushalya@net.c.titech.ac.jp, murata@c.titech.ac.jp}
\url{http://www.net.c.titech.ac.jp}}
\maketitle              
\begin{abstract}
Real-world networks such as social and communication networks are too large to be observed entirely. Such networks are often \textit{partially observed} such that network size, network topology, and nodes of the original network are unknown. In this paper we formalize the \textit{Adaptive Graph Exploring} problem. We assume that we are given an incomplete snapshot of a large network and additional nodes can be discovered by querying nodes in the currently observed network. The goal of this problem is to maximize the number of observed nodes within a given query budget. Querying which set of nodes maximizes the size of the observed network? We formulate this problem as an exploration-exploitation problem and propose a novel nonparametric multi-arm bandit (MAB) algorithm for identifying which nodes to be queried. Our contributions include: (1) $i$KNN-UCB, a novel nonparametric MAB algorithm, applies $k$-nearest neighbor UCB to the setting when the arms are presented in a vector space. (2) provide theoretical guarantee that $i$KNN-UCB algorithm has sublinear regret and (3) applying $i$KNN-UCB algorithm on synthetic networks and real-world networks from different domains, we show that our method discovers up to 40\% more nodes compared to existing baselines. 

\keywords{network exploration, network search, multi armed bandits}
\end{abstract}

\section{Introduction}

Interactions among different entities in many real-world complex systems are often represented by networks, where the entities are represented by nodes and the interactions among them are represented as links between entities. For example, the information contained in online social networks proved to be valuable in advertising applications such as finding influential users to targeted marketing. Data acquisition is done using Application Programming Interfaces (APIs) offered by respective social networking services. Using these APIs is often time consuming and the number of nodes (e.g., profiles) that can be queried within a given time is restricted. A poorly constructed incomplete network will lead to inaccurate findings. This highlights the importance of acquiring more information as possible using a limited number of queries.

Here, we provide an overview of \textit{Adaptive Graph Exploration} problem. We formally define it in \autoref{sec:method}. Suppose we are given a partially observed network. For instance, a sample of a social network collected by a researcher. Since we do not know how this sample is obtained, only way to enhance this sample is by acquiring data belonging to the unseen portion of the network. We use the term \textit{probing} to refer to \textit{querying} a node to retrieve information about it and its neighborhood. As an example, probing a node of a social network corresponds to obtaining information about a profile and its friends (or followers) using an API or a web service. Several rounds of probing updates the sample with new nodes and links found in the neighborhood of queried nodes. The number of times the network can be probed is restricted by a \textit{probing budget}. Thus, the goal is to enhance the observed graph as much as possible within the \textit{probing budget}.

Two approaches have been proposed to solve the problem of reducing the incompleteness of partially observed networks. First approach involves inferring properties of the unseen part of the network using knowledge of the sample. Such methods infers the missing information by fitting a model of network structure to the observed part \cite{kim2011network}. However, this is not practical for real-world networks as such methods require more structural information about the complete network.  Second approach is acquiring more information by probing as we propose in this paper. Existing heuristic algorithms such as maximum observed degree (MOD) probing and maxreach \cite{soundarajan2016maxreach} require the sample to be obtained in a certain way (e.g., uniform edge sampling). In section \ref{sec:experiments} we show that existing probing algorithms can not be generalized for incomplete networks obtained by different sampling techniques. Furthermore, many real world networks consist of communities, densely connected regions of nodes. Heuristic probing algorithms get stuck inside communities, making them worse than probing a node in random.

\subsubsection*{Our Work.}
A high level overview of the proposed adaptive probing algorithm is illustrated in \autoref{fig:pipeline}. The probing pipeline consists of two major steps, obtaining a feature representation of the observed network and a model which predicts the reward a node will reveal (e.g., the true degree of that node) based on its feature vector. The key assumption of using a learning model is that nodes with similar features in the observed network will result in similar rewards. Our choice of graph features is motivated by work on inferring structural role \cite{henderson2012rolx} and social status \cite{zhao2013inferring} of nodes in social networks. 

\begin{figure}
	\centering
	\includegraphics[width=\textwidth]{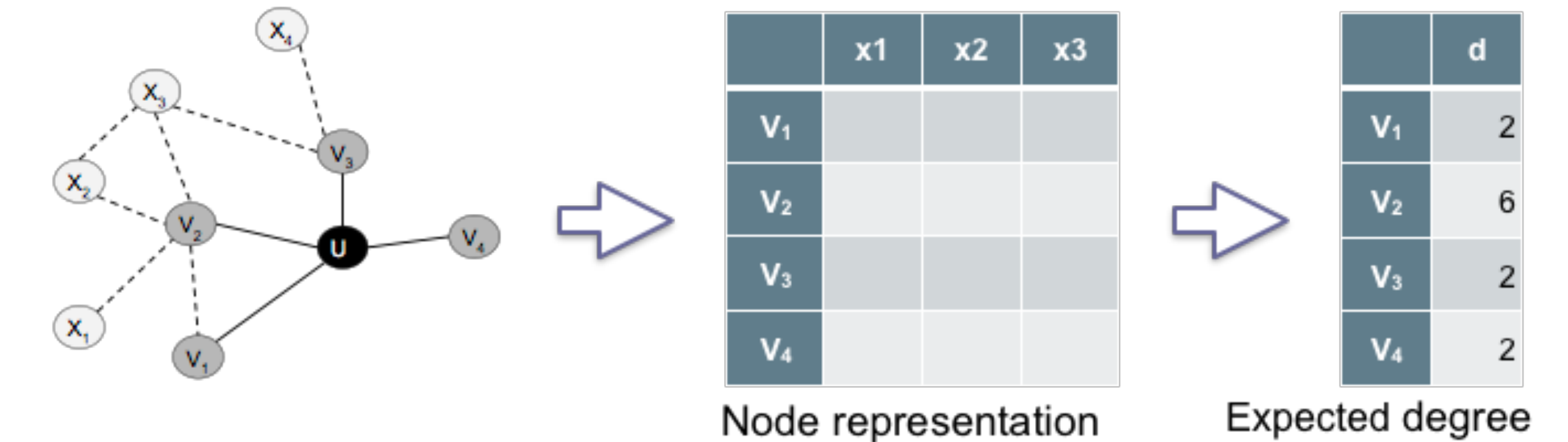}
	\caption{prediction pipeline}
	\label{fig:pipeline}
\end{figure}

One property which makes estimation of rewards different from a normal prediction problem is that our training data is accumulated over the process of probing. Probing nodes with similar features all the time may result in sub-optimal results. This situation is known in reinforcement learning literature as \textit{exploration-exploitation} trade off. Multi-armed bandits \cite{robbins1952some} is a generic way to approach real-world exploitation-exploration problems. In this context, exploitation corresponds to selecting the node which has the largest expected reward and exploration corresponds to selecting some other node for probing. 

Our contributions are threefold:
\begin{enumerate}
	\item A generic approach for enhancing partially observed networks which does not require any prior knowledge about the network.
	\item A novel non-paramteric UCB algorithm ($i$KNN-UCB) to solve the multi-armed bandit problem (MAB) when the arms are represented in a vector space. \footnote{source code available at \url{https://bitbucket.org/kau_mad/bandits/src/pkdd2018/}}
	\item Using $i$KNN-UCB algorithm on synthetic networks and real-world networks from different domains, we demonstrate that our proposed method performs significantly better than existing methods. \footnote{source code available at \url{https://bitbucket.org/kau_mad/net_complete/src/pkdd2018}}
\end{enumerate}

The rest of the paper is structured as following. In \autoref{sec:related work}, we provide an extensive review of related work. \autoref{sec:method} starts with the problem definition and describes our approach in detail. \autoref{sec:experiments} explains the experimental setup and the data sets being used. Then, in \autoref{sec:results} we present empirical evaluations of our bandit algorithm using real-world networks as well as synthetic networks. Finally, \autoref{sec:conclusions} concludes with a brief discussion of the bandit approach and a few promising directions as future work. 

\section{Related Work} \label{sec:related work}

\subsection{Network Crawling and Sampling}
Although this problem looks similar to network crawling and sampling, objective of most sampling algorithms is to select a representative subset of the nodes (or edges) when the entire network is accessible \cite{ahmed2014network}.
 In contrast, we are improving a given incomplete network and we have no knowledge of how the sample is being obtained. Particularly, snowball sampling \cite{lee2006statistical} can be used when the information about the complete network is not accessible. But it suffers from the same drawbacks as of heuristic algorithms; it does not adapt as the observed information updates. As another related problem, link prediction \cite{liben2007link} can predict missing links on a network, but not missing regions of nodes. The only way to enhance the observed sample is by iteratively querying observed nodes and adding their neighboring nodes to the sample.

\subsection{Active Search}
Active search on graphs \cite{wang2013active,bilgic2010active} is another related problem with the objective of finding as much \textit{target nodes}  as possible possessing a given property. Most of the previous work relating to this problem assume that the complete graph is observable and any node can be queried to find its label \cite{ma2015active}. If only an incomplete view is available, relying only on the observed information may not obtain the best possible reward. In addition to \textit{exploitation} of the best option according to available information, \textit{exploration} of other possible options is performed to achieve better rewards. A common approach to finding a balance between exploitation vs exploration trade-off is formulating it as a multi-armed bandit problem (MAB) \cite{mahajan2008multi}.  SN-UCB1\cite{bnaya2013social} and NETEXP\cite{singla2015information} are such MAB based active search algorithms proposed for partially observed networks. Probing a node in NETEXP reveals 2-hop neighborhood, which is not true for real world social networks. SN-UCB1 does not provide a significant improvement over the existing heuristic methods. \textcite{soundarajan2017varepsilon} recently proposed $\epsilon$-WGX, a multi-armed bandit approach to solve Active Edge Probing (AEP) problem in incomplete networks. Though AEP looks similar, it is fundamentally different from ours as a node can be probed multiple times and only one neighboring edge is revealed in each probe.

\section{Proposed Bandit Based Probing Method} \label{sec:method }
We start this section with the formal definition of the problem. Then we describe the main components of this work and the multi-armed bandit algorithm in detail.

\label{sec:method}
\subsection{Problem Definition}
Suppose there is a large unweighted undirected graph $G$ which can not be observed fully, but only a partially observed network $G'$ is available. We denote the initial incomplete network as  $G'_0$. Our goal is to grow this network by probing any of the observed nodes at each time step. Using this notation we denote the observed network at time $t$ as $G'_t$. \autoref{tab:notations} lists the notation that we will be using in this section.

\begin{table}[]
\centering
\caption{Table of notations}
\label{tab:notations}
\begin{tabular}{ll}
\toprule
Symbol & Definition \\
\midrule
$G(V, E)$ & original network \\
$G'_t(V'_t, E'_t)$ & observed network at time $t$ \\
$K_t$ & set of candidate nodes at time $t$ \\
$T$ & probing budget \\
\bottomrule
\end{tabular}
\end{table}

\begin{definition}
  \textbf{Probing} a node reveals all links incident to it and the identity of its neighboring nodes. 
\end{definition}
 
The number of times we are allowed to probe the network is constrained by the \textbf{probing budget} ($T \in \mathbb{Z}$)

\begin{figure}
\centering
\includegraphics[width=8cm]{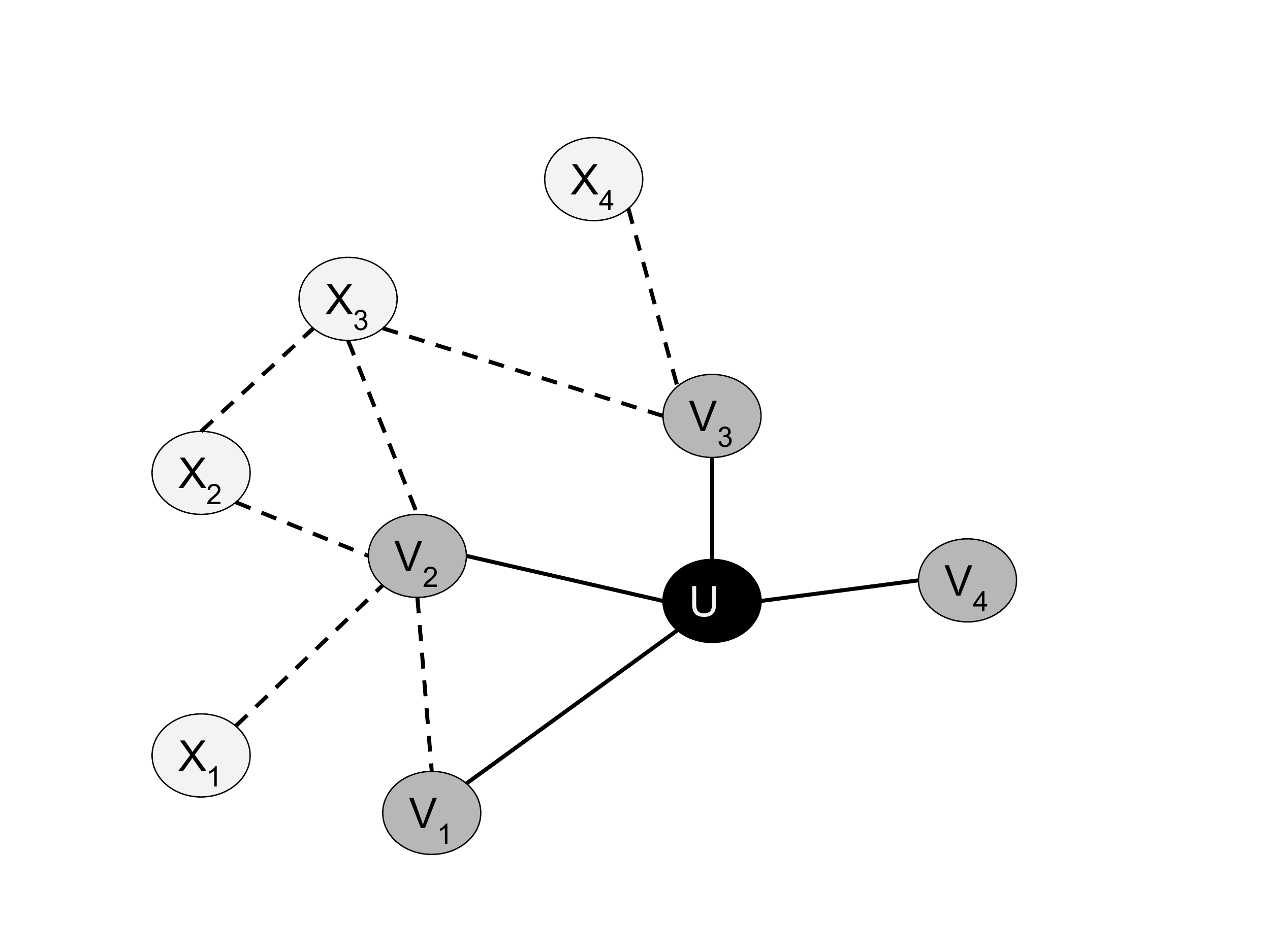}
\caption{Example of an incomplete network. The black node $U$ is probed and gray nodes {$V_1,\cdots, V_4$} are observed. The white nodes {$X_1,\cdots, X_4$} exist in the original network $G$, are yet to be observed.}
\label{fig:network}
\end{figure}

\begin{definition}
  At time $t$, a node in the original network $G$ can belong to any of the following three sets. 
  \begin{enumerate}
    \item \textbf{unobserved}: existence of these nodes is not visible to the algorithm.
	\item \textbf{observed}: these nodes exist in both $G$ and $G'_t$, but has not being probed.
    \item \textbf{probed}: the algorithm knows about these nodes and their neighboring nodes.
 \end{enumerate}
 \label{def:problem}
\end{definition}

\autoref{fig:network} illustrates an example incomplete network. We use bold lines to denote observed links and dash lines to denote unobserved links at the given moment. Even though nodes $V_1$ and $V_2$ are observed when node $U$ is probed, [$V_1, V_2$] link is not observed because neither nodes are probed.

An observed node can either be probed or not probed at the moment. Any observed node which is not probed is considered as a candidate for probing. Hence, we refer such nodes as \textit{candidate nodes}. At the beginning, all the nodes in the given sample are candidate nodes.  Probing a candidate node reveals a \textit{reward} (eg. true degree of a node). Our goal is iteratively selecting $\textbf{b}$ candidate nodes that maximizes the cumulative reward (i.e., number of observed nodes).

\subsection{Calculation of expected reward of candidate nodes}
Instead of using a heuristic metric to choose a candidate node for probing in each time step, we treat this problem as a learning problem. Similar to an active exploration algorithm, our proposed solution consists of three high level steps \cite{pfeiffer2014active}: probing, learning, and prediction. \textit{Probing} a node results in additional information about the observed network. Information about the currently observed network is leveraged to \textit{learn} a predictive model which \textit{predicts} the expected reward of a given candidate node in future. Our approach assumes that candidate nodes with similar structural neighborhoods will result in similar rewards.


Suppose that the feature vector of a candidate node $j$ at time $t$ is $x_{j, t} \in \mathbb{R}^d$.  The learner probes node $j$ at time $t$ and observes the following reward
\begin{equation*}
r_{j,t} = f(x_{j,t}) + \zeta_t,
\end{equation*}
where $f:\mathcal{X} \rightarrow \mathbb{R}$ gives the expected reward of a given node and  $\zeta_t$ is sub-gaussian white noise with mean 0 and variance $\sigma^2$.

\theoremstyle{assumption}
\begin{assumption} \label{asm:lipschitz}
	(Lipschitz condition):
	There exists a constant $L$ such that $|f(x) - f(x')| \leq L \cdot \mathcal{D}(x-x')$ for all $x$,$x'$ $\in \mathcal{X}$. $\mathcal{D}$ is a metric which defines the ``distance" between two vectors $x$ and $x'$.
\end{assumption}

Assumption \ref{asm:lipschitz} expresses that nodes which are similar in terms of their feature vectors will have similar rewards. In the next section, we describe in detail how we formulate this problem as a multi-armed bandit problem. 

\subsection{Bandit Algorithm}
\subsubsection{Problem Setting}
In the classical contextual multi-armed bandit problem, an agent selects one of the $K$ arms (or actions) at each time step and observes a reward depending on the chosen action. In this setting, each arm is assumed to be independent, the rewards are drawn randomly from a probability distribution that is specific to each arm. The goal of the agent is to play a sequence of actions which maximizes the cumulative reward it receives within a given number of time steps. 

Selecting a node from the set of candidate nodes at time step $t$ for probing is similar to pulling an arm in a multi-armed bandit problem. However, the classical notion of K-armed bandit problem assumes that the set of $K$ arms would not change over time and requires each arm to be played several times. In contrast, the set of candidate nodes change as probings  occur over time. And more importantly, a node can not be probed for a second time.

As independent assumption does not hold in our problem setting, it is more suitable to express it as a \textit{structured bandits} problem, in which reward distributions of arms are not independent, but interrelated. In structured bandit problem, the agent deduces relationship between arms based on some $d$-dimensional feature vector $x_a \in \mathbb{R}^d$ assigned to an arm $a$.  

\subsubsection{KNN-UCB algorithm for structured bandits}
\textit{Linear bandits}\cite{rusmevichientong2010linearly, Dani2008} the simplest among such models, assumes the reward is linearly dependent on feature vectors and computes the expected reward of an arm by the inner product of its feature vector and a parameter vector $\theta$. But real data often exhibits more complicated relationships than a linear one. Hence, we choose $k$-nearest neighbor (k-NN) regression to estimate the expected reward of arms. We adapt \textcite{guan2018nonparametric}'s k-armed KNN-UCB algorithm to the structured setting. Upper confidence bound \cite{auer2002using} (UCB) algorithms incorporate an exploration term by calculating a confidence bound for each arm and choose the action corresponding to the largest confidence bound. 

We define $k$-nearest neighbor upper confidence bound ($i$KNN-UCB) rule as
\begin{equation}
\label{eqn:knnucb}
a_t = \argmax_i \hat{f} (x_i) + \alpha \cdot \sigma(x_i)
\end{equation}
where $\alpha > 0$ is a constant determining the amount of exploration.

\begin{definition} \label{def:knn}
	Let the $k$-NN radius of $x \in \mathcal{X}$ be $r_k(x) = inf\{r: |B(x, r_k(x) \cap X) \geq k|\}$ where $B(x,r) = \{ x \in \mathcal{X} : \mathcal{D}(x, x') \leq r\}  $. $k$-NN set of $x \in \mathcal{X}$ be 
	$\mathcal{N}_k (x) := B(x, r_k(x)) \cap X$.
Expected reward of arm $i$, $\hat{f}(x_i)$ is estimated with weighted $k$-NN regression as
\begin{equation}
\label{eqn:knn}
\hat{f}(x_i) = \frac{1}{k} \sum_{x_j \in \mathcal{N}_k(x_i)} \frac{y_j}{\mathcal{D}(x_i, x_j)}\ ,
\end{equation} 
where $y_j$ is the observed reward for $x_j$ and $\mathcal{D}(x_i, x_j)$ is the euclidean distance between feature vectors $x_i$ and $x_j$.
\end{definition}

We define $\sigma(x)$ as the average distance to points in the k-neighborhood,
\begin{equation}
\label{eqn:sigma}
\sigma(x_i) = \frac{1}{k}\sum_{x_j \in \mathcal{N}_k(x_i)}{\mathcal{D}(x_i, x_j)}\ .
\end{equation}

The term $\sigma(x_i)$ is analogous to the term $T_i(t)$ accounting for the number of times action $i$ has been chosen by the time $t$. The way the network is being probed using $i$KNN-UCB is shown in algorithm \ref{alg:bandit}. 

\begin{algorithm}
	\DontPrintSemicolon
	\LinesNumbered
	\caption{$i$KNN-UCB.}
	\label{alg:bandit}
	\SetKwInOut{Input}{Input}
	\SetKwInOut{Output}{Output}
	\SetKwInput{kwInit}{Initialize}
	\Input{incomplete network $G_0' = (V_0', E_0')$, probing budget $b \in \mathbb{N}$, exploration parameter $\alpha$, $k$, $T_0$}
	\Output{A sequence of \textbf{$b$} nodes to probe}
	\kwInit{\textit{candidate nodes} = $V_0'$}
	
	\For{$t \leftarrow 1$ \KwTo $T$}{
		\uIf{$t \leq T_0$}{sample $\textbf{a}_t$ uniformly from $\mathcal{A}_t$ \;}
		\uElse{
			\For{$i$ in \textit{candidate nodes}}
			{
				calculate the feature vector $x_i$\;
				calculate the estimated reward $\hat{f}(x_i)$ with \cref{eqn:knn} \;
				calculate exploration term $\sigma(x_i)$ with \cref{eqn:sigma} \;
			} 
			find the node $\textbf{a}_t$  corresponding to the largest UCB with \cref{eqn:knnucb}
		}
		probe node $\textbf{a}_t$ in the original graph $\textbf{G}$ and observe the reward $\textbf{r}_{t, a_t}$ \;
		Add neighboring nodes $N_{a_t}$ of node $a_t$ to the incomplete network $\textbf{G}'_{t-1}.$ $(G'_t = G'_{t-1} \cup N_{a_t})$ \;
		remove node $\textbf{a}_t$ from \textit{candidate nodes} \;
	} 
\end{algorithm}

\subsubsection{Regret}
The objective of a bandit algorithm is to select arms so as to maximize the cumulative reward over time. Minimization of total regret, is an equivalent way of expressing maximization of cumulative reward. The regret at iteration $t$ equals to the difference between reward of the ``optimal" arm and the reward of a suboptimal arm. In simple terms, regret is the loss incurred by the policy for not playing the optimal arm all the times. In $T$ iterations, we pull arms $a_1, a_2, \cdots, a_n$ and we observe rewards $r_{a_1, 1}, r_{a_2, 2}, \cdots, r_{a_n, n}$. We use the following notion of regret
\begin{equation*}
\mathcal{R}_T = \sum_{t=1}^T [\max_{a} {r_{a, t}} - r_{a_t, t}]\ .
\end{equation*}

\begin{theorem}
	Let $M > 0$ be an arbitrary constant. Then the regret is sublinear with,
	 	$\mathcal{R}_T \leq M \cdot T^{(1-1/d)}$.
\end{theorem}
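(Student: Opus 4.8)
The plan is to run the classical UCB regret template in the nonparametric $k$-NN setting, with the role usually played by the ``number of pulls'' of an arm taken over by the exploration width $\sigma(x)$ of \cref{eqn:sigma}, and then to close the argument with a packing / layer-cake estimate on the (bounded) feature space $\mathcal{X}\subset\mathbb{R}^d$. First I would establish that, on a high-probability event $\mathcal{E}$, the quantity $U_t(x):=\hat f(x)+\alpha\,\sigma(x)$ is a legitimate upper confidence bound for $f$ and that $\hat f(x)-\alpha\,\sigma(x)\le f(x)$ as well. The bias of the weighted $k$-NN estimate \cref{eqn:knn} is controlled by \cref{asm:lipschitz}: each contributing $y_j=f(x_j)+\zeta_j$ satisfies $|f(x_j)-f(x)|\le L\,\mathcal{D}(x,x_j)\le L\,r_k(x)$, and $r_k(x)$ is comparable to $\sigma(x)$ by \cref{def:knn} and \cref{eqn:sigma}; the averaged sub-Gaussian noise is controlled by a Hoeffding/Azuma bound of order $\sigma/\sqrt{k}$, again absorbed into $\alpha\,\sigma(x)$ after a union bound over the $T$ rounds. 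On $\mathcal{E}$ the standard two-line argument applies: writing $a^{\star}_t=\argmax_a f(x_a)$, the rule \cref{eqn:knnucb} gives $f(x_{a^{\star}_t})\le U_t(x_{a^{\star}_t})\le U_t(x_{a_t})$, while $f(x_{a_t})\ge\hat f(x_{a_t})-\alpha\,\sigma(x_{a_t})$, so the per-round regret is at most $2\alpha\,\sigma(x_{a_t})$ plus noise terms that either cancel across arms at the same round or concentrate. Summing over $t$ and using $\sigma(x_{a_t})\le r_k(x_{a_t})$ reduces the whole problem to bounding $\sum_{t=1}^T r_k(x_{a_t})$, where at round $t$ the radius is taken with respect to the at most $t$ previously probed points.

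The crux is this geometric sum. Let $\Delta$ be the diameter of $\mathcal{X}$. For any scale $s>0$, cover $\mathcal{X}$ by $N(s)=C_d(\Delta/s)^d$ balls of radius $s/2$; once one of these small balls contains $k$ probed points, every later probe falling into it has $k$-NN radius at most $s$, so $\#\{t:r_k(x_{a_t})>s\}\le k\,N(s)=kC_d(\Delta/s)^d$, and trivially also $\le T$. By the layer-cake identity, $\sum_{t=1}^T r_k(x_{a_t})=\int_0^{\Delta}\#\{t:r_k(x_{a_t})>s\}\,ds\le\int_0^{\Delta}\min\!\big(T,\,kC_d(\Delta/s)^d\big)\,ds$. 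Splitting the integral at the crossover $s^{\star}=\Delta(kC_d/T)^{1/d}$ and evaluating both pieces (here the dimension $d\ge2$ enters, so the tail $\int s^{-d}\,ds$ behaves well) gives $\sum_{t=1}^T r_k(x_{a_t})\le c(d,\Delta,k)\,T^{\,1-1/d}$; the same scaling can be read off from the heuristic that a well-separated probe set has $r_k(x_{a_t})$ of order $(k/t)^{1/d}$ and $\sum_t(k/t)^{1/d}=O(k^{1/d}T^{1-1/d})$.

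Combining the two steps on $\mathcal{E}$, and bounding the contribution of the complementary (probability $\le1/T$) event by an additive constant since rewards are bounded, yields $\mathcal{R}_T\le 2\alpha\,c(d,\Delta,k)\,T^{1-1/d}+O(1)$. Collecting every factor independent of $T$ — namely $\alpha,L,\sigma,k,\Delta$ and the packing constant $C_d$ — into a single constant $M$, and reading the statement's ``arbitrary $M$'' as ``for some sufficiently large $M$'' (a fully rigorous treatment also carries a mild $\sqrt{\log T}$ factor from the concentration step, and one tacitly needs $d\ge2$), gives $\mathcal{R}_T\le M\cdot T^{\,1-1/d}$, i.e.\ sublinear regret. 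I expect the main obstacle to be the geometric step: one must show that \emph{adaptively} chosen probe locations cannot keep producing large $k$-NN radii — the packing bound above — and then turn that pointwise bound into the $T^{1-1/d}$ rate via the layer-cake split, which is precisely where the dimension enters and why the rate degrades as $d$ grows. A secondary subtlety is the nonstandard $1/\mathcal{D}(x_i,x_j)$ weighting in \cref{eqn:knn}: controlling its bias and variance cleanly needs either a lower bound on inter-point distances or a truncation, and this is what forces the boundedness assumptions on $\mathcal{X}$ used throughout.
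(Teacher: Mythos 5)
Your proposal is correct in substance but follows a genuinely different route from the paper. You run the full UCB machinery: a high-probability confidence event (Lipschitz bias plus Hoeffding control of the sub-Gaussian noise, absorbed into $\alpha\,\sigma(\cdot)$), the standard two-line optimism argument bounding the per-round regret by $2\alpha\,\sigma(x_{a_t})\le 2\alpha\,r_k(x_{a_t})$, and then a self-contained covering/layer-cake argument showing that even \emph{adaptively} chosen probe points can produce at most $kC_d(\Delta/s)^d$ rounds with $k$-NN radius exceeding $s$, which integrates to $\sum_t r_k(x_{a_t})=O(T^{1-1/d})$. The paper is much terser: it bounds the instantaneous regret directly by $L\cdot\mathcal{D}(x_{\pi_t,t},x_{a_t,t})$ via \cref{asm:lipschitz} as in \cref{eqn:regret_lipschitz}, passes from this distance to $\sup_{x}r_k(x)$ in \cref{eqn:regret_last}, imports the rate $\sup_x r_k(x)\le M_1(2k/t)^{1/d}$ from \textcite{jiang2017rates}, and finishes with the same integral $\int_1^T t^{-1/d}dt\le T^{1-1/d}$. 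What your version buys is rigor exactly where the paper is thinnest: the paper never justifies why the chosen arm lies within a $k$-NN radius of the optimal arm (its proof in fact never uses the UCB rule \cref{eqn:knnucb}, the estimator \cref{eqn:knn}, the noise, or $\alpha$ at all --- $\alpha$ only reappears in the remark), and the cited radius bound is a result for i.i.d.\ samples from a density bounded below, whose hypotheses are not verified for adaptively probed nodes; your packing bound replaces both steps and handles adaptivity directly. What the paper's route buys is brevity and a clean deterministic statement with explicit constants $L, M_1$, avoiding your extra $\sqrt{\log T}$ concentration factor. Both arguments share the final $\sum_t t^{-1/d}$ scaling and both tacitly need $d\ge 2$ (and a bounded feature space); you are also right to flag the unnormalized $1/\mathcal{D}(x_i,x_j)$ weights in \cref{eqn:knn} as a genuine technical irritant that the paper's proof sidesteps by never analyzing the estimator.
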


\begin{proof}
	The regret for bandits in a continuous feature space is
	\begin{equation} \label{eqn:regret}
	\mathcal{R}_T = \sum_{t=1}^T [\sup_{x_i \in \mathcal{X}} {f(x_{i,t})} - f(x_{a_t, t})] .
	\end{equation}
	
	Let $\sup_{i \in \mathcal{A}_t} {f(x_{i,t})}$ be $f_{\pi_t, t}$
	\begin{equation*}
		\mathcal{R}_T = \sum_{t=1}^T [f(x_{\pi_t, t}) - f(x_{a_t, t})]
	\end{equation*}

 	Using Lipschitz assumption
 	\begin{equation} \label{eqn:regret_lipschitz}
		\mathcal{R}_T  \leq \sum_{t=1}^T \big[  L \cdot \mathcal{D}(x_{\pi_t, t},x_{a_t, t}) \big]
 	\end{equation}
 	
 	\begin{equation} \label{eqn:regret_last}
 	\mathcal{R}_T \leq L \cdot \sum_{t=1}^T \big[ \sup_{x \in \mathcal{X}} r_k(x)  \big]\ .
 	\end{equation}
 	
 	From \textcite{jiang2017rates}, 
 	\begin{equation}
 	\sup_{x \in \mathcal{X}}r_k(x) \leq M_1 \cdot \bigg( \frac{2k}{t} \bigg)^{1/d}
 	\end{equation}
 	where $M_1>0$ is a constant. Using this in \cref{eqn:regret_last} results in
 	\begin{equation}
 	\mathcal{R}_T \leq L \cdot \big[\sum_{t=1}^T M_1 \cdot \bigg( \frac{2k}{t} \bigg)^{1/d} \big]
 	\end{equation}
 	
 	With $M \geq L \cdot M_1$
 	\begin{eqnarray}
 		\mathcal{R}_T  \leq M \int_1^T t^{-1/d} dt
 	  	\\ \leq M \cdot T^{(1-1/d)}\ .
 	\end{eqnarray}
 	Hence, the regret is sub-linear.
\end{proof}

\begin{remark}
	If we select $\alpha \geq L$, we can write \cref{eqn:regret_lipschitz} as 
	\begin{equation}
	\mathcal{R}_T  \leq \sum_{t=1}^T \alpha \cdot \sigma(x_{a_t, t})\ .
	\end{equation}
\end{remark}

\section{Experiments} 
\label{sec:experiments}
 We construct the feature vector $x_j$ of candidate node $j$ as a vector of following features. For each feature, the local neighborhood of node $j$ in the observed graph $G'_t$ is considered. 
\begin{enumerate}
	\item degree centrality 
	\item average degree centrality of its neighbors
	\item median degree centrality of its neighbors
	\item the average percentage of probed neighbors found in the neighborhood
\end{enumerate}
These features are chosen because their effectiveness is shown in previous work on finding structurally similar nodes \cite{henderson2012rolx}.

\subsection{Data}
We use simulated network data as well as publicly available\footnote{http://snap.stanford.edu/data/index.html} real-world data sets of social and information networks. 

\subsubsection{Synthetic data.}
The aim of using synthetic networks is to investigate the behavior of the proposed method on networks with different network configurations. We use two random network models, Barabasi-Albert model (BA) \cite{barabasi1999emergence} and Lancichinetti-Fortunato-Radicchi (LFR) \cite{lancichinetti2008benchmark} benchmark to create networks with different characteristics. All these networks have the same number of nodes ($N=34,546)$, the number of nodes in the HepPh citation network. BA model generates networks with power-law degree distributions. But real-world communication networks possess different properties such as homophily \cite{mcpherson2001birds} which can not be represented by a BA model. We use LFR model to generate networks with community structure. The mixing parameter $\mu$ of LFR model decides the probability of a node linking other nodes belonging to different communities. Low values of $\mu$ will result in dense communities as the chance of having intra-community links ($1-\mu$) is higher compared to the chance of inter-community links ($\mu$). We created LRF benchmark networks with varying the value of $\mu$ in the range [0.1, 0.5] to investigate the impact of underlying community structure of a network on our method.  

\subsubsection{Real-world data.} \autoref{tab:data} gives a summary of the seven real-world network data sets we use. In citation networks, if a paper $i$ cites another paper $j$, the network contains an undirected edge connecting paper $i$ and paper $j$. Similarly, co-authorship networks represent authors as nodes and two authors are connected if they have published at least one paper together. Nodes of the network Enron-email are email addresses of Enron employees. If user $i$ has sent at least one email to the user $j$, nodes $i$ and $j$ are connected by an undirected edge. Twitter data set is made of 1000 ego-networks consisting of 4,869 Twitter lists \cite{leskovec2012learning}. 
Epinions, and Slashdot can be considered as web of trust networks. Even though Epinion and Slashdot networks are often labeled as online social networks, they differ from the usual notion of social networks as they represent who-trust-whom data of users instead of the relationships or interaction among users. In these networks, a user tags another user as trustworthy or not. They are sparse compared to online social networks.

\begin{table*}
 \caption{Description of data sets. (CA = co-authorship)}
 \label{tab:data}
 \begin{tabular}{ l r r r r r r r r }
  \toprule
  \ & HepPh & HepTh & Epinions  & Twitter & Stanford & AstroPh & DBLP & Slashdot \\ 
   \midrule
   Type  & citation & citation & web & social & web & CA & CA & web \\
    Nodes  & 34,546 & 27,770 & 75,789 & 81,306 & 281,903 & 18,772 &  317,080 & 82,168 \\
   Edges  & 421,578 & 352,807 & 508,837 & 1,768,149 & 2,312,497 &  198,110 & 1,049,866 & 549,202 \\
   Avg Clustering  & 0.2848 & 0.3120 & 0.1378 & 0.5653 & 0.5976 & 0.6306 & 0.6324 & 0.0603 \\ 
   \bottomrule
 \end{tabular}
\end{table*}

\subsection{Impact of Initial Sampling Method }
To investigate how the sampling method used to acquire the initial sample  influence the probing methods, we generate graph samples using two sampling methods. These are the methods we use:
\begin{enumerate}
\item Random node sampling (RN): At each step we choose one neighbor of a node already in the sample. 
\item Breadth-first search (BFS): Nodes are added to the sample in the order they are observed.
\end{enumerate}
 
\subsection{Methods}
We compare the performance of our algorithm against the following algorithms.

\subsubsection{Algorithms that do not use node features}

\begin{itemize}
	\item \textbf{Random walk (RW).} In this trivial baseline, we select one of the candidate nodes randomly for probing. This is equivalent to running our Bandit Explorer algorithm with only one cluster and using the random strategy for node selection.
	\item \textbf{Maximum observed degree (MOD).} This greedy method proposed in \cite{avrachenkov2014pay} is the current state-of-the-art algorithm for finding the network cover in an online manner. 
\end{itemize} 
%

\subsubsection{Algorithms that use node features}

\begin{itemize}
	\item \textbf{Lin-UCB.} This applies the UCB algorithm by \textcite{Dani2008} assuming that the reward of an arm is linearly dependent on its feature vector.
	\item \textbf{KNN-greedy}. This algorithm chooses the arm corresponding to the largest expected reward calculated by k-NN model.
	\item \textbf{KNN-$\epsilon$-greedy.} This algorithm chooses a random arm with probability $\epsilon$ while selecting the arm with  k-NN regression selects the arm rest of the times.
	\item \textbf{$i$KNN-UCB} This is our proposed algorithm, \autoref{alg:bandit}.
\end{itemize}  

\section{Results} 
\label{sec:results}

\subsection{Analysis on Synthetic Networks}
We probe incomplete BA and LFR networks obtained by RN and BFS sampling for 1,000 iterations ($T=1000$). Number of nodes observed in the BA network is shown in Figure \ref{fig:BA_50k_20}. For all networks generated by Barabasi-Albert (BA) model, MOD could observe more nodes than bandit algorithm. This confirms \textcite{avrachenkov2014pay}'s claim that MOD probing can achieve the best connected network cover for networks generated by preferential attachment processes.

\begin{figure}[ht]
	\begin{minipage}[b]{0.5\linewidth}
		\centering
		\includegraphics[width=\textwidth]{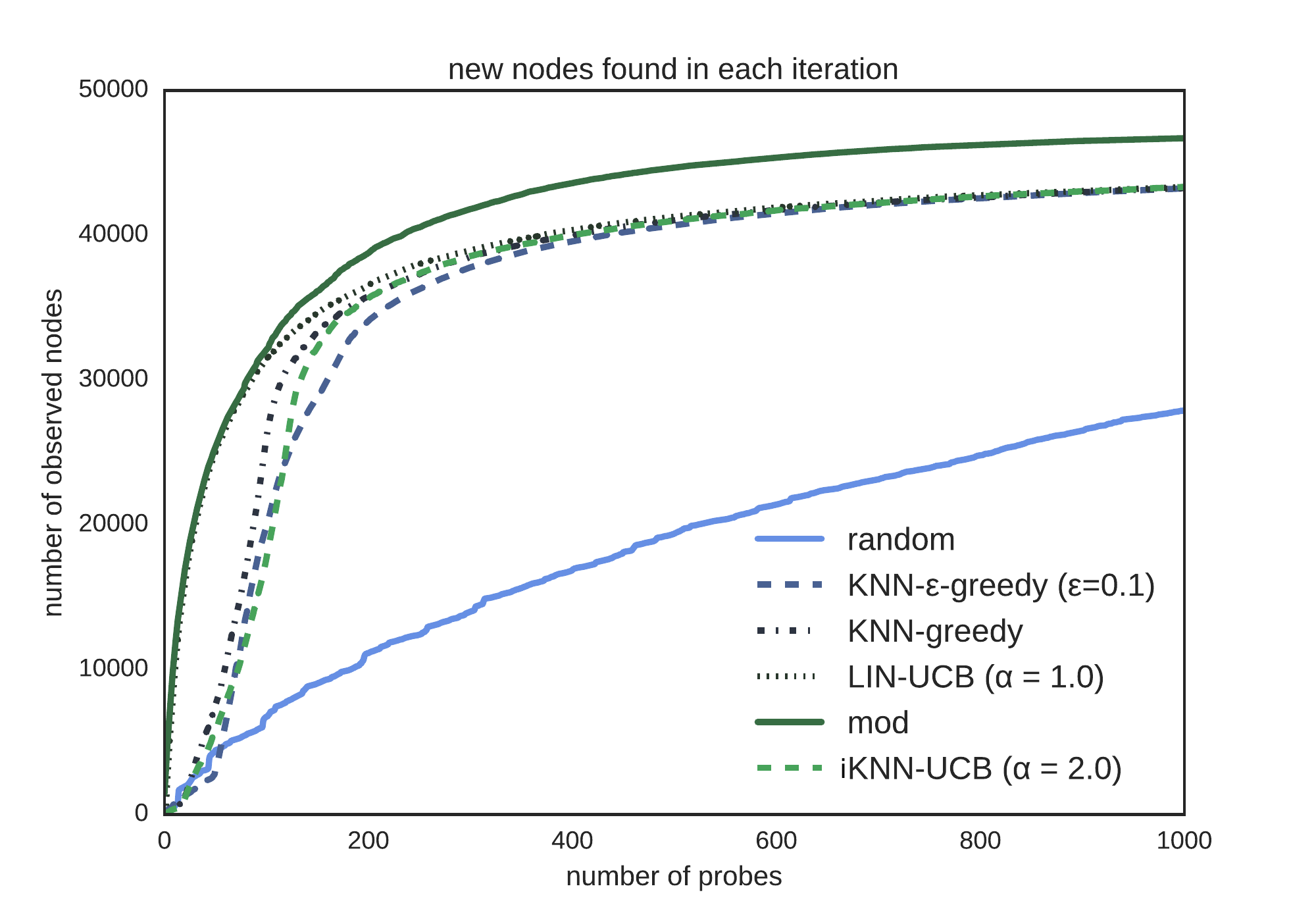}
		\small{(a)}
	\end{minipage}
	\begin{minipage}[b]{0.5\linewidth}
		\centering
		\includegraphics[width=\textwidth]{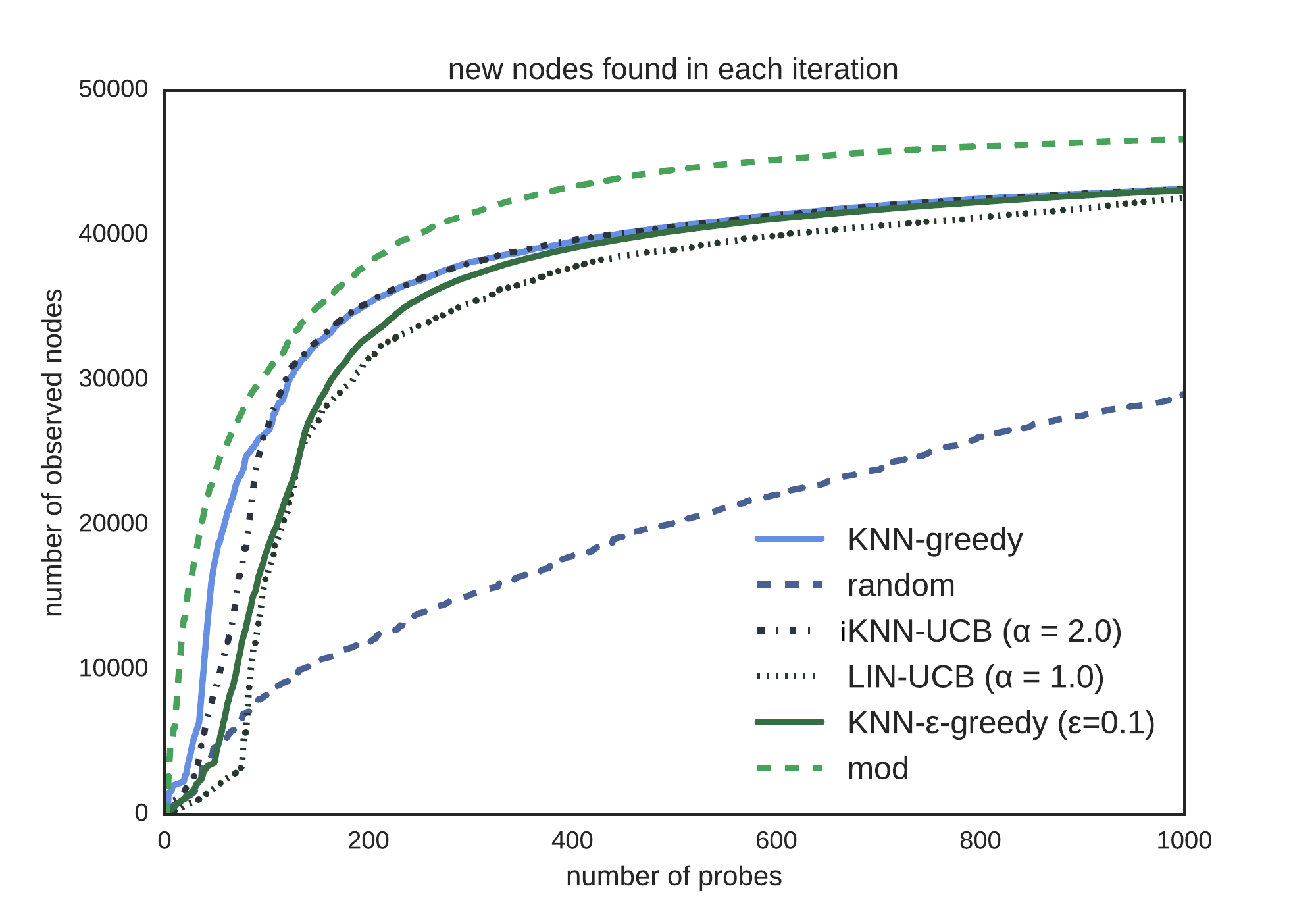}
		\small{(b)}
	\end{minipage}
	\caption{Scale-free network created by Barabasi-Albert model. (nodes=50,000, m = 20) (a) random node (RN) sample (b) BFS sample }
	\label{fig:BA_50k_20}
\end{figure}

\begin{figure}[ht]
	\begin{minipage}[b]{0.5\linewidth}
		\centering
		\includegraphics[width=\textwidth]{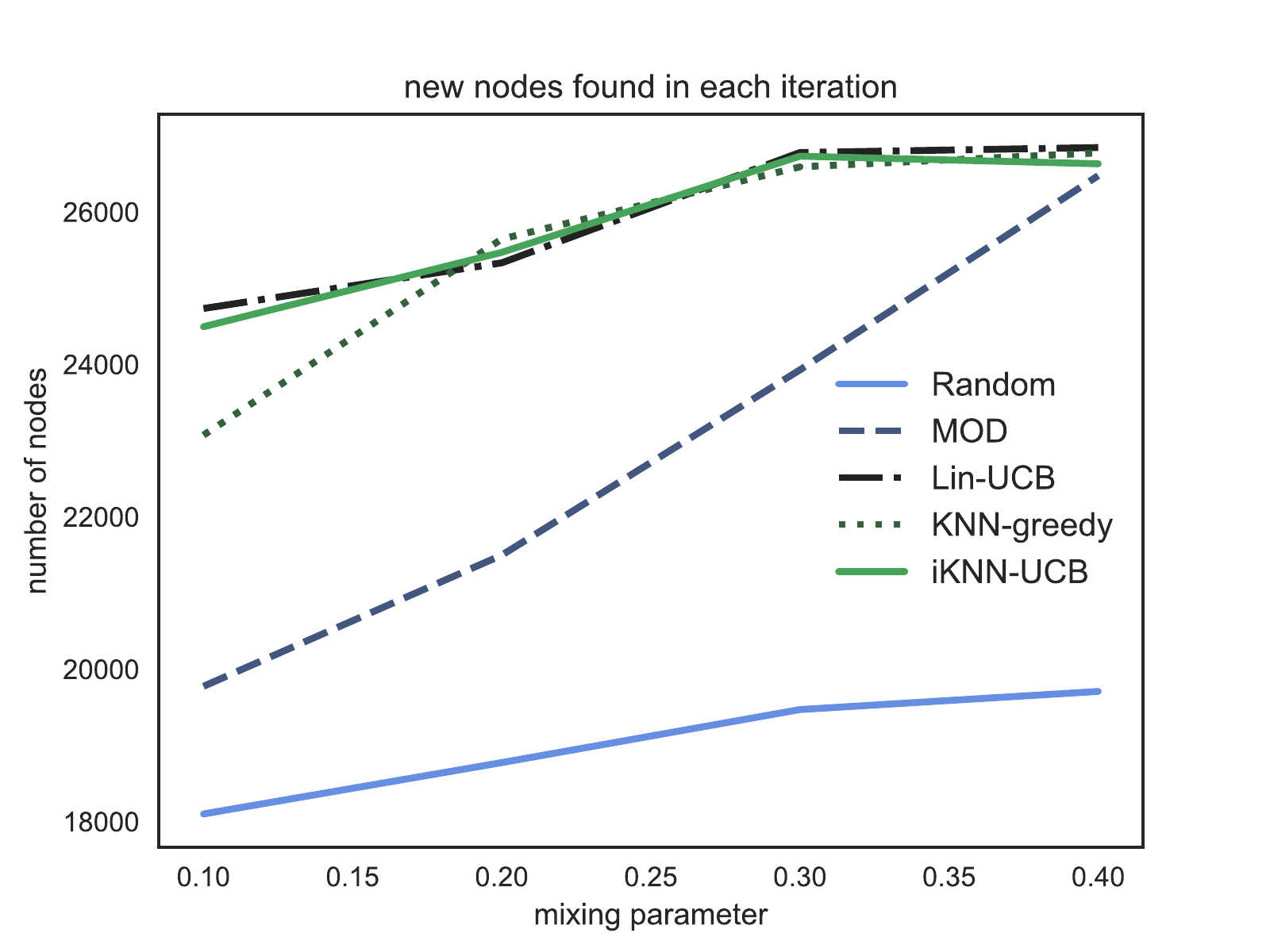}
		\small{(a)}
	\end{minipage}
	\begin{minipage}[b]{0.5\linewidth}
		\centering
		\includegraphics[width=\textwidth]{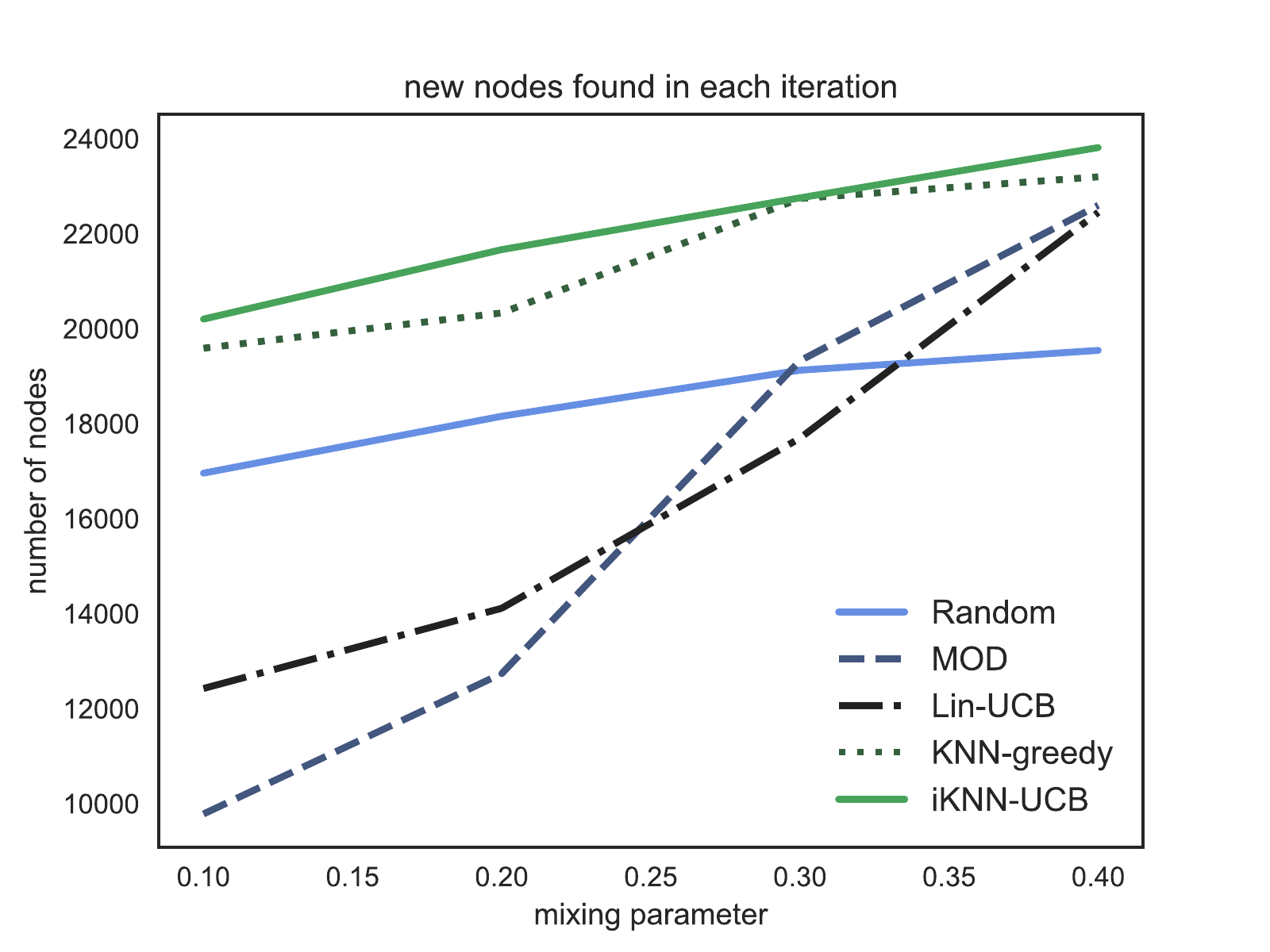}
		\small{(b)}
	\end{minipage}
	\caption{Performance on synthetic networks generated by LFR benchmark (a) RN sample (b) BFS sample}
	\label{fig:LFR}
\end{figure}

To understand how the existence of community structure impacts the probing, we evaluate the performance of all algorithms on synthetic networks generated by different configurations of LFR benchmark model \cite{lancichinetti2008benchmark}. We vary the mixing parameter $\mu$ from 0.1 to 0.5 keeping all other parameters of the model constant ($\gamma = 3$, $\beta = 1.3$, average degree = 25). $i$KNN-UCB significantly outperforms the baseline for networks with smaller $\mu$. When the initial sample is obtained by BFS sampling, $i$KNN-UCB outperforms all baselines by a significant margin. The gap between $i$KNN-UCB and the baseline is larger when the mixing parameter is small, network has significant community structure.
The experimental results on synthetic networks suggest that $i$KNN-UCB algorithm can adapt for  incomplete networks obtained by different sampling techniques and networks with structural properties such as community structure.

\subsection{Results on Real World Networks}
We use 8 real-world networks mentioned in \autoref{tab:data} and generate RN and BFS samples containing 5\% nodes of the original network $G$. Then 1,000 probing steps are performed. We perform each experiment five times initialized with different random seeds and report the average number of additional nodes which were observed in \autoref{fig:real_rn} and \autoref{fig:real_bfs}.

\begin{figure}
	\centering
	\includegraphics[width=\linewidth]{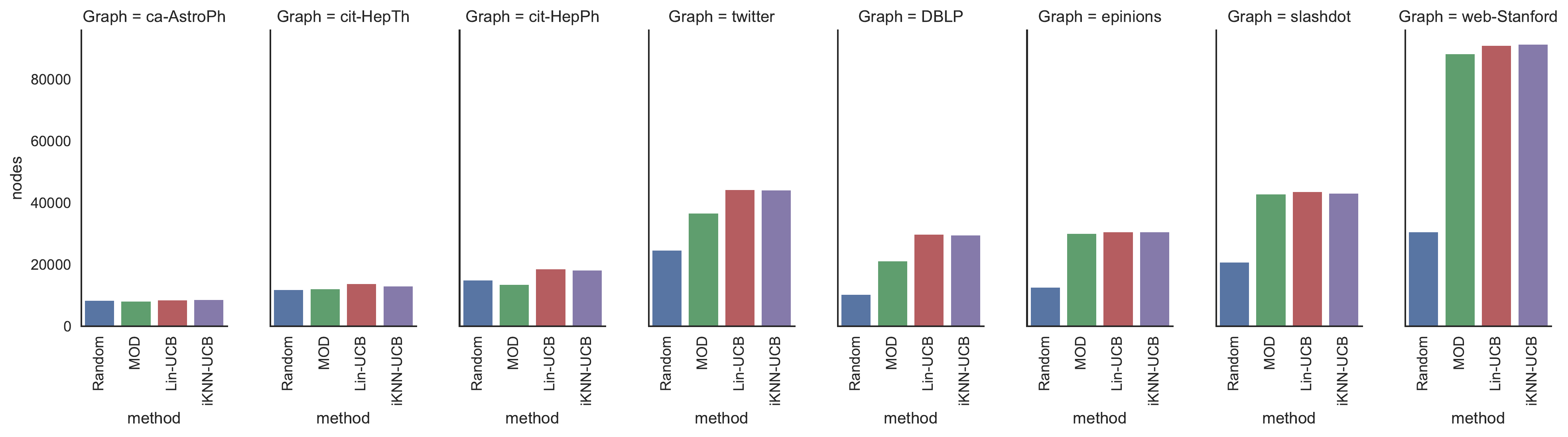}
	\caption{Comparison against baselines: 1000 probes run on 5\% nodes of each network. Each sample is created by performing a random walk on the original network}
	\label{fig:real_rn}
\end{figure}

\begin{figure}
	\centering
	\includegraphics[width=\linewidth]{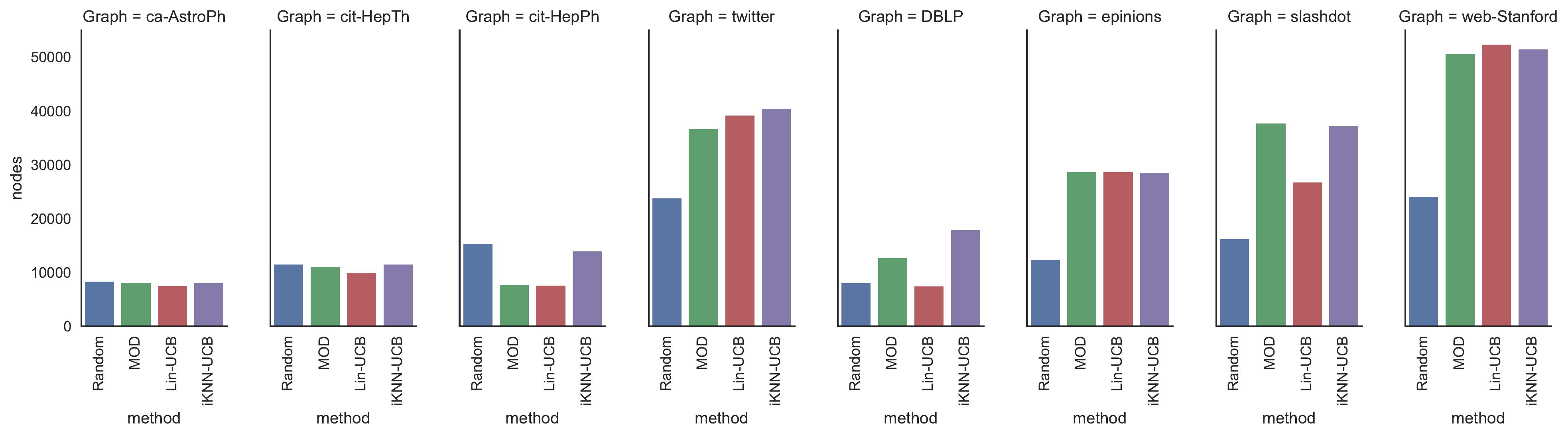}
	\caption{Comparison against baselines: 1000 probes run on 5\% nodes of each network. Each sample is created by performing a breadth first walk on the original network}
	\label{fig:real_bfs}
\end{figure}

$i$KNN-UCB and Lin-UCB bandit algorithms outperform all baseline methods in networks generated by both RN and BFS sampling. Even though Lin-UCB bandit algorithm observes as much nodes as $i$KNN-UCB for RN samples, its performance is worse for BFS samples. This shows that linear model in Lin-UCB is not capable of learning the relationship between observed node features and the true degree of a node if the sample is constructed by a BFS.

\section{Conclusions}
\label{sec:conclusions}
In this paper, we introduced a bandit based exploration algorithm for partially observed incomplete networks. We proposed a novel nonparametric multi-armed bandit algorithm $i$KNN-UCB with sublinear regret. Compared to existing solutions for the  \textit{Adaptive Graph Exploring} problem, the proposed method does not depend on a specific heuristic. Additionally, $i$KNN-UCB bandit algorithm outperforms the baseline methods irrespective of how the initial incomplete network is obtained. We provided experimental evidence for our approach using synthetic networks and variety of real-world networks. Using different configurations  of LFR benchmark networks, we observed that our algorithm outperforms all other baselines significantly when the network exhibits community structure prominently. Since the reward function is independent from the probing procedure, it is easy to define a new reward function to solve a different graph exploration problem (eg. finding a particular type of nodes).  

In this problem, we assumed that probing a node would reveal all its neighboring nodes. However in some real-world scenarios, only a certain number of neighbors is revealed (e.g., follower limit in Twitter API \footnote{https://dev.twitter.com/rest/reference/get/followers/ids}). As future work, we would explore how this current approach can be changed for such different settings of the same problem. 



\printbibliography

\end{document}